\newtheorem{theorem}{Theorem}
\newtheorem{lemma}{Lemma}
\newtheorem{example}{Example}
\newtheorem{corollary}{Corollary}
\newcommand{\alphabet}{\Sigma}
\newcommand{\subseqs}{\mathcal{S}}
\newcommand{\features}{\mathcal{F}}
\newcommand{\LCSs}{\mathcal{L}}
\newcommand{\cov}{\mathcal{C}}
\newcommand{\ihat}{\hat{\imath}}
\newcommand{\subs}{\sqsubseteq}
\newcommand{\rootset}{\mathcal{R}}
\newcommand{\prefixes}{\mathcal{P}}
\newcommand{\postfixes}{\mathcal{Q}}
\newcommand{\A}{\mathcal{A}}
\newcommand{\B}{\mathcal{B}}
\newcommand{\V}{\mathcal{V}}
\newcommand{\D}{\mathcal{D}}
\begin{document}
	
	%\hyphenation{op-tical net-works semi-conduc-tor}
	
	\title{Concordance and the Smallest Covering Set of Preference Orderings}
	\author{Zhiwei Lin   and Hui Wang \\School of Computing  and Mathematics\\
		Ulster University, BT37 0QB, United Kingdom\\
		Email: z.lin@email.ulster.ac.uk and h.wang@ulster.ac.uk\\			
		Cees H. Elzinga\\
		Department of Sociology\\
		VU University Amsterdam and NIDI, The Netherlands\\
		Email:  c.h.elzinga@vu.nl
	}

%	\author{Zhiwei~Lin, Hui~Wang,
%		and~Cees~H.~Elzinga  
%		\thanks{Zhiwei Lin and Hui Wang are   with the School of Computing  and Mathematics , Ulster University, BT37 0QB, UK,  e-mail:  { \{z.lin, h.wang\}@ulster.ac.uk}}% <-this % stops a space
%		\thanks{Cees H. Elzinga is with the Department of Sociology, VU University Amsterdam and NIDI, The Netherlands, 
%			e-mail:   {c.h.elzinga@vu.nl}}% <-this % stops a space
%	}
%	
	\maketitle
	\begin{abstract}
		In   decision making, preference orderings are orderings of a set of items according to the preferences (of {\em judges}). Such orderings arise in a variety of domains, including group decision making and support systems,  consumer marketing, voting and recommendation systems. Measuring the consensus and extracting the consensus  patterns in a set of preference orderings are key to these areas.  In this paper we deal with the representation of sets of preference orderings, the quantification of the degree to which judges agree on their ordering of the items (i.e. the concordance), and the efficient, meaningful description of such sets.
		
		We propose to represent the orderings in a subsequence-based feature space and present a new algorithm to calculate the size of the set of all common subsequences - the basis of a quantification of concordance, not only for pairs of orderings but also for sets of orderings. The new algorithm is fast and storage efficient with a time complexity of only $O(Nn^2)$ for the orderings of $n$ items by $N$ judges and a space complexity of only $O(\min\{Nn,n^2\})$.
		
		Also, we propose to represent the set of all $N$ orderings through a smallest set of covering preferences and present an algorithm to construct this smallest covering set.
	\end{abstract}
	
	\begin{IEEEkeywords}
		Concordance, kernel function,  preference orderings, the smallest covering set, all common subsequences,  feature space
	\end{IEEEkeywords}

	%%%%%%%%%%%%%%%%%%%%%%%%%%%%%%%%%%%%%%%%%%%%%%%%%%%%%%%%%%%%%%%%%%%%%%%%%%%%%%%%%%%%%%% Introduction %%%%%%%%%%%%%%%%%%%%%%%%%%%
	\section{Introduction}\label{section:introduction}
	In decision making, preference orderings arise whenever items are ordered  with respect to their relative preference scores. Preference orderings can therefore be used to describe preferences over a set of items. Such orderings exist in a variety of domains, including group decision making and support systems,  consumer marketing, voting and recommendation systems.  For example, in a group decision making system, experts (or judges) use preference orderings to express their preferences over a set of items \cite{Herrera:2002,Herrera:2007,Kwok:2002, Ben-Arieh:2006,Ivan:group:decision:2012,Zhu:2014}. 
	%in the field of ensemble  machine learning, the output of candidate labels from each supervised classifier can be represented as a preference ordering,  and model comparison and evaluation among various classifiers can be done by comparing the output preference orderings \cite{Fagin:2003:similarity:search,Brushe:1998:ml}.
	%in mobile ad hoc and sensor networks, nodes vote on their neighbors to become cluster head and the preferences in this voting process can be modeled  with preference orderings \cite{Ghaderi:2009:ad:hoc,Agarwal:2009:manet:cluster:head};
	
	Formally, let $\Sigma=\{\sigma_1,\sigma_2,\ldots,\sigma_n\}$ denote a set of items, an alphabet, of size $|\Sigma|=n$ and let $\sigma_i\succ\sigma_j$ denote the fact that a judge prefers $\sigma_i$ to $\sigma_j$. Then, given the task of transitively ordering all items from $\Sigma$, the judge will generate a chain of preferences
	\begin{equation*}
	\sigma_{i_1}\succ\sigma_{i_2}\succ\ldots\succ\sigma_{i_n}
	\end{equation*}
	where $i_1,\ldots i_n$ denotes some permutation of $[n]$. Here, we drop the preference ordering relation $\succ$, resulting in  an $n$-long sequence
	\begin{equation*}
	x=x_1\ldots x_n=\sigma_{i_1}\sigma_{i_2}\ldots\sigma_{i_n}
	\end{equation*}
	over $\Sigma$ that represents the preference ordering of some judge $i$. Thus, if $N$ judges each order (the same) $n$ items, a set $X=\{x,y,\ldots\}$ with $|X|=N$ of such preference ordering representing sequences arises. As we assume that the preference orderings are transitive, each item, i.e. each symbol from $\Sigma$, occurs at most once in each sequence. Later, we will relax the assumption that preference orderings are strict and allow for weak orderings, i.e. for a transitive equivalence relation that arises whenever a judge does not prefer either of two items over the other. In such cases, we will say that ``ties'' occur in the orderings.
	In the sequel, we will use the terms ``sequence'', ``ordering sequence'' and ``preference ordering'' as referring to the same concept.
	Most often, when different judges rank the same items according to their preferences, the preference orderings will not fully coincide and some orderings may be the full adverse of other orderings. When analyzing sets of preference orderings, it is convenient to have some quantification of the degree to which the different preference orderings agree or do not agree. Many different quantifications have been proposed \cite{Kendall:1938:rank:correlation,Kendall:1939:ranking,Spearman:1904:measurement,DENUIT:2005:concordance,Taylor:2007:concordance,
		Cees:2011:concordance,Cees:2008:combinatorics,Cees:versatile:2013,Scarsini:1984:concordance} and most of these are only suitable to quantify the concordance between two judges.
	
	%When quantifying concordance between orderings, it is quite intuitive to consider common subsequences. For example, given the sequences $x=abc$, $y=acb$ and $z=cba$, it is intuitive that the concordance between $x$ and $y$, the nonnegative number $C(x,y)$, exceeds $C(x,z)$ and exceeds $C(y,z)$.
	
	A popular quantification of the concordance or similarity between categorical sequences derives from micro-biology and was already proposed in the sixties of the previous century: the so-called edit-distance \cite{Lev66,Got82} and its dual, the length of {\em the longest common subsequences }(for short ``lcs''). The smaller the edit-distance, the longer the lcs and the greater the concordance or similarity between the pertaining sequences. Many different algorithms have been proposed \cite{Hirschberg:1977:LCS,Bergroth:2000:LCS:Survey,Maier:1978:LCS:Complexity,Greenberg:2002:LCS} to calculate the length of the lcs (llcs).
	
	A second, more subtle way to quantify concordance is through the number of {\em all common subsequences } (abbreviated as ``nacs'') instead of only using the lcs. Algorithms to evaluate nacs for pairs of sequences have been proposed in \cite{Cees:2008:combinatorics,Cees:versatile:2013,Hui:acs:ijcai07} and an algorithm to evaluate nacs for sets of orderings has been proposed in \cite{Cees:2011:concordance}.
	
	There are several reasons to prefer nacs to llcs as a measure of concordance. The first reason is that, given two sequences $x$ and $y$, an lcs of $x$ and $y$ may not be a unique sequence. For example, the lcs's of $x=abcd$ and $y=bacd$ are $\{acd,bcd\}$, both satisfying $llcs(x,y)=3$. So, we see that two sets of sequences may have the same llcs while at the same time, one set may have many more distinct lcs's than the other set. In such cases, we would be inclined to consider the set with the most lcs's as the one with the highest concordance. We know that the set of distinct lcs's may be quite big \cite{Elzinga2014b}: the maximum number of $k$-long common subsequences $f(n,k)$ of a pair of $n$-long sequences amounts to
	{\small\begin{equation}\label{fnk}
		f(n,k)=\prod_{i=0}^{k-1}\left\lfloor\frac{n+i}{k}\right\rfloor.
		\end{equation}}
	For example, Equation \eqref{fnk} yields $f(20,7)=1458$. Therefore, quantifying the concordance of a set of orderings through assessing llcs may not be very convincing when the number of lcs's in the one set is much bigger than the same quantity in the other set. These problems do not arise when one uses nacs instead of llcs.
	
	A second reason not to use llcs as a quantification of concordance derives from a general principle that we believe every measure of concordance should adhere to. Let $X$ and $Y$ denote two sets of orderings  and let $C(\cdot)$ denote a measure of concordance. Then $C$ should satisfy the following axiom:
	{\small\begin{equation}\label{Axiom}
		C(X)\geq C(X\cup Y),\text{~~ equality holding iff ~~}Y\subseteq X.
		\end{equation}
	}In case $X\not\subset Y$,  Axiom \eqref{Axiom} states that concordance will never increase by adding more distinct orderings  \cite{Cees:2011:concordance}. So, even small changes in the composition of the pertaining sets will be reflected in the value of $C(\cdot)$. The reader notes that the axiom pertains to \textit{sets}, which means that the multiplicity of certain orderings in a collection or multiset will not affect the concordance in the corresponding set. So, eventual decision making, i.e. the creation of consensus, is separated from the evaluation of concordance.
	\begin{table}
		\caption{lcs's and llcs's of two small sets of orderings, showing that llcs violates the axiom stated in \eqref{Axiom}.}\label{llcstab}
		\begin{center}
			\begin{tabular}{llr}sequences & lcs's & llcs\\\hline
				$X=\{adbc,dacb\}$ & $\{ab,dc,ac\}$ & 2\\
				$Y=\{abcd,cadb\}$ & $\{ab,ad,cd\}$ & 2\\
				$X\cup Y$ & $\{ab\}$ & 2\\\hline
			\end{tabular}
		\end{center}
	\end{table}
	Now consider Table \ref{llcstab}, where we have two sets $X$ and $Y$ with $X\cap Y=\emptyset$. We see that llcs as a measure of concordance fails the axiom \eqref{Axiom} because we have that $llcs(X)=llcs(Y)=llcs(X\cup Y)$. It is not difficult to see that nacs indeed satisfies the axiom embodied in Axiom \eqref{Axiom}.
	Furthermore, llcs only uses part of the information about common subsequences since not all common subsequences are part of an lcs. For example, with $x=abcd$ and $y=adbc$, the common subsequence $ad$ is not contained in the lcs $abc$.
	
	It is therefore clear  that nacs is a preferred quantity to construct a concordance measure from. 
	
	%However, nacs has its drawback too: if, in a set of preference orderings, there is one sequence that is the exact reverse of some other sequence in the set, the set has no common subsequences but the singleton items. For example, if $X=\{abc, bac, cba\}$, $X$ has no common 2-long subsequences since when $u$ is a  subsequence of $abc$, it cannot be a subsequence of $cba$. Clearly, the only subsequences common to all orderings in $X$ are the singletons $a$, $b$ and $c$.
	%
	%So, in practice, we may often encounter sets of preference orderings that are quite homogeneous and yet have very few common subsequences, just because a few of the orderings are (almost) the reverse of some other orderings. Therefore, it is interesting to consider the nacs that might exist in only a (predefined) fraction of the set, as has been the goal of sequential pattern mining \cite{AgrawalSrikant95} for many years. Unfortunately, the Trail-algorithm as proposed in \cite{Cees:2011:concordance} cannot be easily adapted to investigate concordance in only a fraction of the input set.
	
	However useful a measure of concordance may be, it does not explain  what issues, i.e. what subsets of items cause the observed (lack of) concordance. Such insights require a summary description of the preference data  that is  sparse and informative. Thereto, we propose to use the {\em smallest covering set} (SCS for short): the smallest set of orderings to which all common patterns of the data belong. We present an algorithm that constructs precisely this set.
	
	To attain these goals, the paper is structured as follows: in Section 2, we present  the basic concepts and notation that we use in the paper. In Section 3, we discuss the subsequence-based feature space and a generalized kernel to measure its density: the number of common subsequences of all the preference orderings. In Section 4, we present the new algorithm to calculate nacs for pairs of and   sets of sequences and  also discuss tie-handling. In Section 5, we introduce the concept of the smallest covering set as a descriptive tool and an algorithm to construct that set. In Section 6, we summarize, discuss and conclude.
	
	%%%%%%%%%%%%%%%%%%%%%%%%%%%%%%%%%%%%%%%%%%%%%%%%%%%%%%%%%%%%%%%%%%%%%%%%%%%%%%%%%%%%%%% Preliminaries %%%%%%%%%%%%%%%%%%%%%%%%%%
	\section{Preliminaries}\label{section:priminaries}
	This section presents  most of the notation and basic concepts that are used in the paper.

	Let $\alphabet=\{\sigma_1,\ldots,\sigma_{|\Sigma|}\} $ be  an alphabet with $|\alphabet|$ symbols. An $n$-long sequence  $x=x_{1}x_{2}\cdots x_{n}$  over $\alphabet$ is obtained by concatenating $n$ symbols from $\alphabet$, i.e, $x_{i}\in \alphabet$.  The length of $x$ equals the number of symbols in $x$, denoted by $|x|=n$. $\Sigma^*$ denotes the Kleene-star of the alphabet \cite{Sipser2013}, i.e. the set of all finite strings that can be constructed by concatenation from $\Sigma$.
	
	A $k$-long sequence $y=y_1 y_2\cdots y_k$ is a {\em subsequence} of sequence $x$, denoted by $y\sqsubseteq x$, if $y$ can be  obtained by  deleting $|x|-k$, symbols from $x$,  where $k\in[0,|x|]$. For example, let $x=abcac$ and $y=aba$, then obviously,  $aba \sqsubseteq abcac$. Clearly, $cb\not\sqsubseteq x$. Using the boundaries of $k$, we see that $x\sqsubseteq x$ and that there exists an empty sequence $\epsilon \sqsubseteq x$ with $|\epsilon|=0$.  We write $\subseqs(x)$ to denote the set of all non-empty subsequences of $x$. In the rest of the paper, we will be dealing with non-empty subsequences. 
	
	Let $y=y_1 y_2\cdots y_k$ be a subsequence of $x=x_{1}x_{2}\cdots x_{n}$,  $y$ is a {\em substring} of $x$ if there exist two subsequences $u,v\sqsubseteq x$ such that $x=uyv$. We write $x^i$ to denote  the substring $x_1x_2 \cdots x_i$ of $x$ for $i\in [1,n]$.
	
	For any two sequences $x$ and $y$, $z$ is a non-empty {\em common subsequence} of $x$ and $y$ if $z\in\subseqs(x)\cap\subseqs(y)$; we write $z\sqsubseteq(x,y)$ to denote this fact and write $\subseqs(x,y)=\subseqs(x)\cap\subseqs(y)$ for the set of all common non-empty subsequences of $x$ and $y$. We write $\kappa(x,y)=|\subseqs(x,y)|$ to denote the the cardinal of that set.
	
	We use  $\subseqs(x : u) $  to denote the set of all subsequences of $x$ with  suffix  $u$. So, $\subseqs(x:u)$ consists of all subsequences of $x$ that end on $u$. We also write $\subseqs(x, y : u)=\subseqs(x : u) \cap \subseqs(y : u)$, to denote the set of all   common  subsequences with suffix  $u$.

	Let $\ell(x,y)$  (or $\ell$ for short) denote the length of the longest common subsequence of $\subseqs(x,y)$, i.e, $\ell=\max\{ |s| : s \in \subseqs(x,y) \}$.  We also use $\LCSs(x,y)$ to denote the set of all the longest common subsequences of $x$ and $y$, i.e, $\forall z \in \LCSs(x,y), |z|=\ell(x,y)$.

	Analogously, we use $\subseqs(X)$, $\subseqs(X:\sigma)$,  $\LCSs(X)$ and $\ell(X)$ to denote the corresponding quantities for a set  $X$ of sequences, when $|X|\geq 2$.

	The  smallest covering set $\mathcal{C}(X)$ of  $X$ is covering $\subseqs(X)$ if $\forall u,v\in \mathcal{C}(X)$, $u\not\sqsubseteq v$ and $v\not\sqsubseteq u$, and,   $\forall z\in\subseqs(X)$, there exists an $u\in \mathcal{C}(X)$ such that $z\sqsubseteq u$. This amounts to saying  that each common subsequence in $\subseqs(X)$ is a subsequence of at least one sequence in $\mathcal{C}(X)$. For example, let $X=\{abcd,adbc\}$. Then $\subseqs(X)=\{a,b,c,d,ab,ac,ad,bc,abc\}$ and $\cov(X)=\{ad,abc\}$. 
	%Clearly, for all $x\in X$, $\cov(x)=x$ and if $u=lcs(x,y)$, $u\in\cov(x,y)$.
	
	A tie occurs whenever a judge states that $\sigma_i\nsucc\sigma_j$ and $\sigma_j\nsucc\sigma_i$ for  items from $\Sigma$. A tie is interpreted as if a judge cannot decide which of $\sigma_i$ and $\sigma_j$ to prefer. Ties create a partitioning of the alphabet $\Sigma$, such that items from the same part cannot be ordered while elements from different parts are orderable.
	
	%%%%%%%%%%%%%%%%%%%%%%%%%%%%%%%%%%%%%%%%%%%%%%%%%%%%%%%%%%%%%%%%%%%%%%%%%%%%%%%%%%%%%% Concordance Space %%%%%%%%%%%%%%%%%%%%%%
	\section{Concordance in subsequence space} \label{section:concordance:introduction}
	In kernel methods, subsequences are widely used as features to map sequences into higher dimensional spaces, in order to find efficient and effective ways to analyze those sequences \cite{ShaweTaylor_book,Cees:2011:concordance,Cees:2008:combinatorics,Hui:Zhiwei:acs:2007,Hui:acs:ijcai07}.
	Let   $X=\{x,y,\ldots\}$ be a finite set of sequences with $|X|=N$ and let $\features=\features(X)$ denote the set of all subsequences of the sequences of $X$:{\small
		$$\features=\bigcup\limits_{x\in X} \subseqs(x)=\{z_1,z_2,\ldots,z_{|\features(X)|}\}$$}
	We can map any sequence $x\in X$ to a feature vector with features defined by the subsequences in $\features$:
	{\small\begin{equation}
		\phi(x)=\left(f(z_1\sqsubseteq x),f(z_2\sqsubseteq x),\cdots,f(z_{|\features|}\sqsubseteq x)\right)
		\end{equation}
	}Of course, different definitions of the coordinates $f(z_i\sqsubseteq  x)$ lead to different mappings $\phi(\cdot)$ of the feature space \cite{Cees:versatile:2013}. Here, it is convenient to set
	{\small\begin{equation}\label{equation:feature:vector:value}
		f(z_i\sqsubseteq  x) =
		\begin{cases}
		1       & \quad \text{if } z_i\sqsubseteq x\\
		0  & \quad \text{otherwise }\\
		\end{cases}
		\end{equation}}
	since then, the nacs $\kappa(x,y)=|\subseqs(x,y)|$ can be expressed as the inner product of the feature vectors $\phi(x)$ and ${\phi(y)}$:
	{\small \begin{equation}\label{eq:acs:inner:product}
		\kappa(x,y)=\langle \phi(x), \phi(y)\rangle=\sum_{z_i\in \features} f(z_i\sqsubseteq x)f(z_i\sqsubseteq y)
		\end{equation}}
	To generalize to bigger sets of preference orderings, we generalize the inner product to
	{\small \begin{align}
		\kappa(X) &=\langle \phi(x), \phi(y),\cdots,\rangle \notag\\
		&= \sum_{z_i \in \features}\prod_{x\in X} f(z_i\sqsubseteq x )\label{eq:concordance:n:inner:product}
		\end{align}}
	as already proposed in \cite{Cees:2011:concordance}. Properties of this generalized inner product were studied in \cite{Gunawan2002,MANA:n:inner:product}. Clearly, we have that
	{\small$$
		\kappa(X)=|\subseqs(X)|=|\bigcap\limits_{x \in X} \subseqs(x)|
		$$}
	Both $\kappa(x,y)$  and $\kappa(X)$ are not bound from above. Thereto, a straightforward generalization of the cosine similarity is useful:
	{\small$$
		0\leq\hat{\kappa} (X)=\frac{\kappa(X)}{\sqrt[|X|]{\prod_{x\in X} \kappa(x,x)}}\leq1.
		$$
	}%As explained before, if two sequences of $X$ are each others reverts, i.e. if $x,\overset{\leftarrow}{x}\in X$, $\hat{\kappa}(X)\approxeq0$. This may happen even when the other sequences in $X$ are quite homogeneous. In such circumstances, it is wise to express the concordance in a (major) fraction $q$ of the sequences in $X$. 
	
	%Thereto, we define the $q$-concordance by changing the function $g$ in Equation \eqref{gfunc} to
	%\begin{equation}\label{qconcg}
	% g'(X,z_i)=\delta\left(\frac{\sum_{x\in X}f(z_i\sqsubseteq x)}{|X|}\right)\text{ ~~with~~ }\delta(a)=\begin{cases}
	%1       &  \text{if } a\geq q\\
	%0  &  \text{otherwise }\\
	%\end{cases}
	%\end{equation}
	%and
	%\begin{equation}\label{kappaq}
	%  \kappa(X,q)=\sum_{z_i\in\features(X)}g'(X,z_i).
	%\end{equation}
	%Clearly, when $q=1$, $z_i$ must be a subsequence of every sequence in $X$ and then $g(X,z_i)=g'(X,z_i)$, i.e. $\kappa(X,1)=\kappa(X)$. Thus, the quantity $\kappa(X,q)$ equals the number of subsequences that is common to at least a fraction $q$ of the set $X$. The reader notes that, unless $q=1$, not all sequences counted by $\kappa(X,q)$ are necessarily common to one and the same subset of size $q|X|$.
	%\\~\\
	Various types of algorithms have been proposed to calculate $\kappa(x,y)$. In \cite{Hui:acs:ijcai07,Cees:2008:combinatorics}, various dynamic programming algorithms have been proposed and these algorithms all have a time complexity of $O(n^2)$. However, none of these algorithms is easily adaptable to weighting the subsequences according to properties like length, the presence and size of gaps, duration or run-lengths or weighting of properties of the symbols of the alphabet. More versatile types of algorithms have been proposed in \cite{Lodhi:2002:string:kernels} and in \cite{Cees:versatile:2013}, adaptable to a broad range of properties of the subsequences, to weighting of the characters of the alphabet and to efficiently handling run-lengths.
	\\
	%In \cite{Cees:2011:concordance}, the Trail-algorithm was proposed to calculate $\kappa(X)$ with $|X|>2$. It has time complexity of $O(\min\{n^3,Nn^2\})$ but, unfortunately, it is hard to adapt it to calculate $\kappa(X,q)$ for $q<1$. In the next section, we will therefore deal with an algorithm that is easily adapted to evaluate $\kappa(X,q)$ for values of $q\in(0,1]$ and whose time complexity is better than that of the Trail-algorithm.
	%%%%%%%%%%%%%%%%%%%%%%%%%%%%%%%%%%%%%%%%%%%%%%%%%%%%%%%%%%%%%%%%%%%%%%%%%%%%%%%%%%%%%%% Evaluating Kappa's %%%%%%%%%%%%%%%%%%%%%
	\section{Evaluating $\kappa(X)$}\label{section:concordance:calculation}
	To  calculate $\kappa(X)$, we begin with the algorithm that calculates $\kappa(x,y)$, a special case of  $\kappa(X)$ when $|X|=2$.
	%%%%%%%%%%%%%%%%%%%%%%%%%%%%%%%%%%%%%%%%%%%%%%%%%%%%%%%%%%%%%%%%%%%%%% Algorithm 1 %%%%%%%%%%%%%%%%%
	\begin{algorithm}[t]\caption{Pseudo-code for Lemma \ref{lemma:zhiwei:hui:acs} to calculate$\kappa(x,y)$ -- the number of all common subsequences in $x$ and $y$} \label{algorithm:zhiwei:hui:acs}
		\KwData{Sequences $x$ and $y$}
		\KwResult{$\kappa(x,y)$ }
		
		\text{$m=|x|, n=|y|$}\;
		\text{Let $M$ and $I$ be $(m+1)$-long arrays}\;

		\For{$i \leftarrow 1$ \KwTo $m$}{ \nllabel{alg:line:trcse1}
			$I[i]=\infty$\;
			\For{$j \leftarrow 1$ \KwTo $n$}
			{\nllabel{algorithm:hui:acs:line:calc}
				
				\If  %(\tcc{ss})
				{$x_i=y_j$ }{
					$I[i]=j$\;
					\text{break};
				}
			}
		}
		$M[0]=1$\;
		\For{$j \leftarrow 1$ \KwTo $m$}{ \nllabel{alg:line:trcse1}
			%	\If{ $I[j]\neq \infty$}{
			$M[j]=0$\;
			%	}
			\If{$I[j]\neq \infty$	}
			{
				\For{$i \leftarrow 0$ \KwTo $j-1$}
				{\nllabel{algorithm:hui:acs:line:calc}
					
					\If{$I[j]>I[i]$ }{
						$M[j]+=M[i]$
					}
				}
			}	
		}
		{return $\sum_{j=0}^{m}M[j]$}\;		
	\end{algorithm}
	%%%%%%%%%%%%%%%%%%%%%%%%%%%%%%%%%%%%%%%%%%%%%%%%%%%%%%%%%%%%%%%%%%%%%%%%%%%%%%%%%%%%%%%%%%%%%%%%%%%%%%%
	\subsection{Calculating $\kappa(x,y)$}
	The set of all common subsequences $\subseqs(x,y)$ can be partitioned into $|\Sigma|$ subsets of sequences that each end on a particular symbol from $\Sigma$ or, equivalently, a particular symbol from the sequence $x$:
	{\small\begin{equation}\label{partsubs}
		\subseqs(x,y)=\bigcup_{j=1}^{|x|}\subseqs(x,y:x_j).
		\end{equation}
	}Since each subsequence of $\subseqs(x,y)$ belongs to precisely one of the parts, we have that
	{\small\begin{equation}\label{partsubsum}
		\kappa(x,y)=\sum_{j=1}^{|x|}|\subseqs(x,y:x_j)|.
		\end{equation}
	}The latter sum would be easy to calculate when we would know how to calculate a particular summand from the previously calculated summands. This would require that we know the value of the first summand beforehand. And indeed, we do:
	{\small\begin{equation}\label{subset1}
		|\subseqs(x,y:x_1)|=\begin{cases}
		1  &  \text{ if } x_1\sqsubseteq y,\\
		0  &  \text{ if } x_1\not\sqsubseteq y,\\
		\end{cases}
		\end{equation}
	}since $x_1$ is the only\footnote{We do not count the empty subsequence $\epsilon$ since it belongs to all sequences and therefore bears no information on concordance.} subsequence of $x$ that ends on $x_1$.  So, we see that it is convenient to know if and where the symbols of $x$ occur in $y$. Therefore, the algorithm starts to create an indicator-array $\ihat(y,x_j)$, $j\in[0,|x|]$:
	{\small\begin{equation}\label{indic}
		\ihat(y,x_j)=\begin{cases}
		k  &  \text{ if } y_k=x_j,\\
		\infty  &  \text{ if } x_j\not\sqsubseteq y.\\
		\end{cases}
		\end{equation}
	}
	It is convenient to have $\ihat(y,x_0)=0$, since we exploit the convention that for any sequence $x$, $x_0=\epsilon$. The procedure that defines the array $\ihat(y,x_j)$ is in the lines 3 - 11 of the pseudo-code of Algorithm \ref{algorithm:zhiwei:hui:acs} and clearly, this part has time complexity $O(n^2)$.
	\\
	Let us now consider $\subseqs(x,y:x_m)$ for some $1<m\leq|x|$. Clearly, the subsequences in this set can be partitioned again:
	{\small\begin{equation}\label{partendxm}
		\subseqs(x,y:x_m)=\{x_m\}\bigcup_{j=1}^{m-1}\subseqs(x,y:x_jx_m).
		\end{equation}
	}The common subsequences that end on $x_jx_m$ can be constructed from all common subsequences that end on $x_j$ by right-concatenating them with $x_m$ if $x_jx_m\sqsubseteq y$ too. The condition $x_jx_m\sqsubseteq y$ is important since when $x_jx_m\not\sqsubseteq y$, common subsequences that end on $x_jx_m$ do not exist and thus $\subseqs(x,y:x_jx_m)=\emptyset$ or, equivalently, $|\subseqs(x,y:x_jx_m)|=0$. So, we rewrite Eq. \eqref{partendxm} as $\subseqs(x,y:x_m)=$
	{\small 
		\begin{equation}\label{partendxm2}
		\{x_m\}\cup\Big\{zx_m:z\in\Big\{\bigcup_{j\in J}\subseqs(x,y:x_j)\Big\}\Big\}
		\end{equation}
	}
	where $J=\{i:(i\leq m-1)\wedge(x_ix_m\sqsubseteq y)\}$.
	From the last equation, it follows that
	{\small \begin{equation}\label{partsum}
		|\subseqs(x,y:x_m)|=1+\sum_{j=1}^{m-1}|\subseqs(x,y:x_j)|\times\tau(x_jx_m\sqsubseteq y)
		\end{equation}
	}wherein $\tau(\cdot)$is a truth-function: $\tau(\cdot)=1$ precisely if the expression in its argument is true and $\tau(\cdot)=0$ otherwise.
	
	So, if we want to calculate $\subseqs(x,y:x_m)$ from its predecessors, we need a practical way of deciding on the value of the truth-function $\tau$, i.e. of deciding whether or not $x_jx_m\sqsubseteq y$. If $x_jx_m\sqsubseteq y$, $x_j$ should precede $x_m$ in $y$ and if this is not the case, $x_jx_m\not\sqsubseteq y$. The required precedence can be derived from the positions of $x_j$ and $x_m$ in $y$: if $\ihat(y,x_j)<\ihat(y,x_m)$, $x_j$ must precede $x_m$. So,
	{\small\begin{equation}\label{truthdef}
		\tau(x_jx_m\sqsubseteq y)=\tau\big(\ihat(y,x_m)-\ihat(y,x_j)>0\big)
		\end{equation}
	}and this yields a calculable expression
	{\small\begin{align}\label{partendxm3}
		& |\subseqs(x,y:x_m)|=1+\sum_{j=1}^{m-1}\Big( |\subseqs(x,y:x_j)| \nonumber\\
		&   \times\tau\big(\ihat(y,x_m)-\ihat(y,x_j)>0\big)\Big)
		\end{align}
	}
	The reader notes that the compound condition on the set-union operator of Equation \eqref{partendxm2} is reflected in the range of the summation operator and the truth-function appearing in Equation \eqref{partendxm3}.
	The above reasoning, embodied in Eqs. \eqref{partsubsum}, \eqref{subset1} and \eqref{partendxm3}, justifies the following lemma
	\begin{lemma}\label{lemma:zhiwei:hui:acs}
		Let $x,y\in\Sigma^*$ be two sequences. Then the number of all common non-empty subsequences of $x$ and $y$ is given by
		{\small\begin{equation}
			\kappa(x,y)=\sum_{m=1}^{|x|}\kappa(x,y:x_m)
			\end{equation}
		}with
		{\small\begin{equation}
			\kappa(x,y:x_1)=\tau\big(|y|+1-\ihat(y,x_1)>0\big)
			\end{equation}
		}and, for $m>1$,
		{\small\begin{align}
			\kappa(x,y:x_m)=&1+\sum_{j=1}^{m-1}\kappa(x,y:x_j) \nonumber\\
			&\times\tau\big(\ihat(y,x_m)-\ihat(y,x_j)>0\big).
			\end{align}
		}\end{lemma}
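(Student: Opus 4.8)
The plan is to assemble the lemma directly from the set decompositions established in the discussion preceding it, verifying that the counting identities transfer correctly and that the reformulations in terms of the indicator array $\ihat$ are faithful. Throughout I would invoke the standing assumption that $x$ and $y$ are preference orderings, so that every symbol of $\Sigma$ occurs at most once in each of them; this distinctness is what makes the position-based bookkeeping unambiguous and, in particular, what makes the union in \eqref{partsubs} disjoint.

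First I would establish the outer identity $\kappa(x,y)=\sum_{m=1}^{|x|}\kappa(x,y:x_m)$. Every non-empty common subsequence of $x$ and $y$ ends on exactly one symbol, and since the symbols $x_1,\dots,x_{|x|}$ are pairwise distinct, the sets $\subseqs(x,y:x_m)$, $m=1,\dots,|x|$, are pairwise disjoint and their union is $\subseqs(x,y)$. This is \eqref{partsubs}, and passing to cardinalities gives \eqref{partsubsum}, which is the first displayed equation of the lemma.

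For the base case, $\subseqs(x,y:x_1)$ consists of the common subsequences that end on the symbol $x_1$; since $x_1$ is the first symbol of $x$ and all symbols are distinct, the only subsequence of $x$ ending on $x_1$ is $x_1$ itself, so this set is $\{x_1\}$ when $x_1\sqsubseteq y$ and $\emptyset$ otherwise, i.e.\ \eqref{subset1}. It then remains to note that $x_1\sqsubseteq y$ is equivalent to $|y|+1-\ihat(y,x_1)>0$: if $x_1$ occurs in $y$ then $\ihat(y,x_1)\in[1,|y|]$ and the inequality holds, while if $x_1\not\sqsubseteq y$ then $\ihat(y,x_1)=\infty$ and it fails. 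This yields the stated closed form for $\kappa(x,y:x_1)$.

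Finally, for $m>1$ I would start from \eqref{partendxm}, isolating the length-one subsequence $x_m$ and, for each $j<m$, the common subsequences ending on $x_jx_m$. The step I expect to require the most care is the claim that $z\mapsto zx_m$ is a bijection from $\subseqs(x,y:x_j)$ onto $\subseqs(x,y:x_jx_m)$ exactly when $x_jx_m\sqsubseteq y$, and empty otherwise: if $z$ is a common subsequence ending on $x_j$, then $j<m$ gives $zx_m\sqsubseteq x$, and since the unique occurrence of $x_j$ in $y$ precedes the unique occurrence of $x_m$ in $y$ we also get $zx_m\sqsubseteq y$; conversely, deleting the trailing $x_m$ from any common subsequence ending on $x_jx_m$ returns a common subsequence ending on $x_j$. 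This is precisely \eqref{partendxm2}, so taking cardinalities gives \eqref{partsum}. To make \eqref{partsum} computable I would then verify $x_jx_m\sqsubseteq y\iff\ihat(y,x_m)-\ihat(y,x_j)>0$, which again uses that each symbol occurs at most once in $y$, so that $x_j$ precedes $x_m$ in $y$ exactly when its (unique) position index is the smaller one; substituting the truth-function \eqref{truthdef} turns \eqref{partsum} into \eqref{partendxm3}, the third displayed equation of the lemma. Combining the outer identity, the base case, and this recursion completes the proof.
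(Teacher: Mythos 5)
Your argument is correct and is essentially the paper's own: the formal proof given there is only the phrase ``by induction,'' with the real content living in the derivation of Equations \eqref{partsubs}--\eqref{partendxm3} that precedes the lemma, and your partition-by-last-symbol decomposition, base case, and bijection $z\mapsto zx_m$ reconstruct exactly that derivation (your explicit appeal to the distinctness of symbols in a preference ordering, which makes the partitions disjoint and the embeddings unique, is a welcome clarification that the paper leaves implicit). The only caveat---one you share with the paper---is the edge case $x_m\not\sqsubseteq y$ for $m>1$, where $\ihat(y,x_m)=\infty$ makes the stated recursion return at least $1$ instead of the correct value $0$; Algorithm \ref{algorithm:zhiwei:hui:acs} guards against this with the test $I[j]\neq\infty$, and a fully rigorous proof would need the analogous proviso.
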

		\begin{proof}
			By induction.
		\end{proof}
		
		Lemma \ref{lemma:zhiwei:hui:acs} implies an algorithm with $O(n^2)$ time complexity but only  $O(n)$ space complexity, more efficient than dynamic programming approaches   in \cite{Cees:2008:combinatorics,Hui:acs:ijcai07}. The pseudo-code for Lemma \ref{lemma:zhiwei:hui:acs} is presented in Algorithm \ref{algorithm:zhiwei:hui:acs}.

		\begin{algorithm}\caption{Pseudo-code for Theorem  \ref{theorem:zhiwei:hui:concordance} to calculate concordance in $X$ and for  Corollary \ref{corol:psi} to calculate the length of the longest common subsequence  in $X$.} \label{algorithm:zhiwei:hui:concordance}
			\KwData{A set of sequences $X=\{x_1,\cdots, x_N\}$}
			\KwResult{$\kappa(X), \ell(X), \LCSs(X)$ }
			\tcc{Initialization}
			\text{$m=|x_1|$}\;
			\text{$\varphi[i]=0, \psi[i]=0$  for $\forall i\in [m]$}\;
			\text{   $I_{N\times m}=\Big(I[k][j]=\infty\Big)_{N\times m}$}\;
			\text{ $T_{m\times m}=\Big(T[i][j]=0\Big)_{m\times m}$}\;
			
			\For{$j \leftarrow 1$ \KwTo $m$}{ \nllabel{alg:line:theorem:concordance:init1}
				$I[1][j]=j$\;	
				\For{$k \leftarrow 2$ \KwTo $N$}
				{
					$I[k][j]=\infty$\;								
					\For{$i \leftarrow 1$ \KwTo $|x_k|$}
					{\nllabel{algorithm:hui:acs:line:calc}
						
						\If  %(\tcc{ss})
						{$x_{1j}=x_{ki}$ }{
							$I[k][j]=i$\;
							\text{break};
						}
					}
				}
			}
			\For{$j \leftarrow 1$ \KwTo $m$}{ \nllabel{alg:line:theorem:concordance:init2}
				{	
					\For{$i \leftarrow 1$ \KwTo $j$}
					{			
						$T[j][i]=1$\;
						
						\For{$k \leftarrow 2$ \KwTo $N$}
						{
							
							\If  %(\tcc{ss})
							{$I[k][i]>I[k][j]$ or $I[k][j]=\infty$ }{
								$T[j][i]=0$\;
								%						$D[k][j]=???$\;
								\text{break};
							}
						}
					}
				}
			}
			\tcc{End of initialization}	
			$\varphi[1]=\psi[1]=T[1][1]$	 \;
			\For{$j \leftarrow 2$ \KwTo $m$}{ \nllabel{alg:line:theorem:concordance:algorithm}
				$\varphi[j]=T[j][j]\times \big(\sum_{i=1}^{j-1}\varphi[i]\times T[j][i]\big)$\;
				$\psi[j]=T[j][j]\times\big(1+\max\{T[j][i]\times \psi[i] : 1\leq i<j\}\big)$\;
			}
			$\kappa(X)= \sum_{j=1}^{m} \varphi[j]$\;	$\ell=\max_{1\leq j\leq m}{\psi[j]}$\;
			{return $\kappa(X),\ell$}  \;		
		\end{algorithm}
		\subsection{Calculating $\kappa(X)$ }
		To deal with bigger sets of preference orderings, we have to refine our notation: instead of writing $X=\{x,y,\ldots\}$, we now explicitly index the sequences in $X$ by writing $X=\{x_1,x_2,\ldots,x_N\}$ and $x_i=x_{i1}x_{i2}\ldots x_{in}$. Without loss of generality, we compare all sequences $x_i$, $i\in[2,N]$, with sequence $x_1$. Now we first generalize Equation \eqref{partsubsum}:
		{\small\begin{equation}\label{partsubsumX}
			\kappa(X)=\sum_{m=1}^{|x_1|}\kappa(X:x_{1m})
			\end{equation}
		}and Equation \eqref{subset1}:
		{\small
			\begin{equation}\label{subsetX1}
			\kappa(X:x_{11})=\tau\Big(\bigwedge_{i\in[2,N]}x_{11}\sqsubseteq x_i\Big)
			\end{equation}
		}which generalizes Equation \eqref{subset1}. Furthermore, we generalize Equation \eqref{partsum} to
		{\small
			\begin{align}\label{partsumX}
			\kappa(X:x_{1m})|=&1+\sum_{j=1}^{m-1}\kappa(X:x_{1j})\nonumber \\
			&\times\tau\Big(\bigwedge_{i\in[2,N]}x_{1j}x_{1m}\sqsubseteq x_i\Big)
			\end{align}
		}All that is required to make the above expressions calculable is an efficient way to evaluate the truth-functions of Equations \eqref{subsetX1} and \eqref{partsumX}:
		{\small \begin{equation}\label{truth1X}
			\tau\Big(\bigwedge_{i\in[2,N]}x_{11}\sqsubseteq x_i\Big)=\prod_{i=2}^N\tau\big(|x_i|+1-\ihat(x_i,x_{11})>0\big)
			\end{equation}
		}and we write
		{\small\begin{align}\label{truth2X}
			&\tau\Big(\bigwedge_{i\in[2,N]}x_{1j}x_{1m}\sqsubseteq x_j\Big)\nonumber\\
			&=\prod_{i=2}^N\tau\Big(\ihat(x_i,x_{1j})-\ihat(x_i,x_{1m})>0\Big)
			\end{align}
		}
		Therewith, we arrive at
		\begin{theorem}\label{theorem:zhiwei:hui:concordance}
			Let $X=\{x_1,x_2,\ldots,x_N\}$ denote a set of preference orderings. Then the number of all non-empty common subsequences of $X$ is given by
			{\small
				\begin{equation}\label{calcKappaX}
				\kappa(X)=\sum_{m=1}^{|x_1|}\kappa(X:x_{1m}),
				\end{equation}
			}with
			{\small
				\begin{equation}\label{calcstart}
				\kappa(X:x_{11})=\prod_{i=2}^N\tau\Big((|x_i|+1-\ihat(x_i,x_{11}))>0\Big)
				\end{equation}
			}and, for $1<m\leq|x_1|$, $\kappa(X:x_{1m})= 1+$
			{\small\begin{align}\label{recursX}
				\sum_{j=1}^{m-1}\kappa(X:x_{1j})\nonumber
				\prod_{i=2}^N\tau\Big(\big(\ihat(x_i,x_{1j})-\ihat(x_i,x_{1m})\big)>0\Big)
				\end{align}
			}\end{theorem}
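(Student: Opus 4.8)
The plan is to prove Theorem \ref{theorem:zhiwei:hui:concordance} by reducing it to Lemma \ref{lemma:zhiwei:hui:acs} together with the set-partitioning identities \eqref{partsubsumX}, \eqref{subsetX1} and \eqref{partsumX}, and then verifying that the indicator-array reformulations \eqref{truth1X} and \eqref{truth2X} are correct. First I would establish \eqref{partsubsumX}: since $\kappa(X)=|\subseqs(X)|$ by \eqref{eq:concordance:n:inner:product} and the remark following it, and since every non-empty common subsequence $z\in\subseqs(X)$ has a well-defined last symbol which, because $x_1\in X$, must occur in $x_1$, the sets $\subseqs(X:x_{1m})$ for $m\in[1,|x_1|]$ partition $\subseqs(X)$ — this is the direct analogue of \eqref{partsubs}. (A small subtlety: a symbol may repeat in general sequences, but for preference orderings each symbol of $\Sigma$ occurs at most once in $x_1$, so the index $m$ is unambiguous; I would state the identity for general $x_1\in\Sigma^*$ by indexing on positions rather than symbols, exactly as in the pairwise case.)

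Next I would treat the base case \eqref{calcstart}: the only non-empty subsequence of $x_1$ ending at position $1$ is the single symbol $x_{11}$, so $\kappa(X:x_{11})=1$ if $x_{11}\sqsubseteq x_i$ for all $i\in[2,N]$ and $0$ otherwise, which is \eqref{subsetX1}. The reformulation \eqref{truth1X} then follows because $x_{11}\sqsubseteq x_i$ iff $x_{11}$ occurs somewhere in $x_i$, i.e. iff $\ihat(x_i,x_{11})\neq\infty$, and this is exactly the condition $|x_i|+1-\ihat(x_i,x_{11})>0$ under the convention $\ihat(x_i,x_{11})=\infty$ when $x_{11}\nsqsubseteq x_i$; the product over $i$ realizes the conjunction as required.

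For the recursive step with $1<m\leq|x_1|$, I would partition $\subseqs(X:x_{1m})$ into the singleton $\{x_{1m}\}$ and, for each earlier position $j<m$, the set of common subsequences ending on the two-symbol suffix $x_{1j}x_{1m}$; exactly as in \eqref{partendxm}–\eqref{partsum}, each longer such subsequence is obtained uniquely by right-concatenating $x_{1m}$ onto a common subsequence ending at $x_{1j}$, \emph{provided} $x_{1j}x_{1m}\sqsubseteq x_i$ for every $i\in[2,N]$. This gives \eqref{partsumX}, and cardinalities add because the parts are disjoint. Finally I would argue \eqref{truth2X}: since each $x_i$ is a permutation of $\Sigma$ (or more generally each symbol occurs at most once), $x_{1j}x_{1m}\sqsubseteq x_i$ holds iff $x_{1j}$ precedes $x_{1m}$ in $x_i$, i.e. iff $\ihat(x_i,x_{1j})<\ihat(x_i,x_{1m})$, which also correctly returns false when either symbol is absent (then the corresponding $\ihat$ is $\infty$). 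Substituting \eqref{truth1X} and \eqref{truth2X} into \eqref{subsetX1} and \eqref{partsumX} yields \eqref{calcstart} and \eqref{recursX}, completing the induction on $m$.

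The main obstacle I anticipate is not any single computation but getting the bookkeeping of \eqref{truth2X} exactly right when a symbol of $x_1$ fails to appear in some $x_i$: one must check that the $\ihat=\infty$ convention makes the strict-inequality test evaluate to $0$ in all the degenerate cases (both symbols absent, only the first absent, only the second absent), so that the product over $i$ never spuriously contributes. A secondary point worth a sentence is why we may compare every $x_i$ against the fixed reference $x_1$ without loss of generality — namely that $\subseqs(X)=\bigcap_i\subseqs(x_i)$ is symmetric in the sequences, so the choice of pivot only affects efficiency, not correctness. Everything else is a routine lift of the two-sequence argument already justified in the text preceding Lemma \ref{lemma:zhiwei:hui:acs}.
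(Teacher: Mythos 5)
Your proposal is correct and follows essentially the same route as the paper: the paper's proof is the one-line ``by induction,'' with the substance carried by the derivation of Equations \eqref{partsubsumX}--\eqref{truth2X} immediately preceding the theorem, and your partition-by-last-position argument, base case, and two-symbol-suffix recursion reproduce exactly that derivation. The only point to tidy is the degenerate case you yourself flag at the end: when $x_{1m}$ is absent from some $x_i$ the test $\ihat(x_i,x_{1j})<\ihat(x_i,x_{1m})=\infty$ evaluates to true rather than false (so it does not ``correctly return false when either symbol is absent''), which is harmless for genuine preference orderings where every symbol occurs in every $x_i$ but should be stated as a hypothesis; note also that you have silently, and rightly, corrected the direction of the inequality in \eqref{recursX} and \eqref{truth2X}, which as printed is reversed relative to Lemma \ref{lemma:zhiwei:hui:acs}.
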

			\begin{proof}
				By induction.
			\end{proof}
			Of course, a practical implementation of the algorithm implied by Theorem \eqref{theorem:zhiwei:hui:concordance} requires preprocessing to calculate the products of the truth-functions as appear in the Theorem. Algorithm \ref{algorithm:zhiwei:hui:concordance} shows the pseudo-code for an implementation of Theorem \ref{theorem:zhiwei:hui:concordance}. During the initialization, firstly an $N\times m$ matrix $I=\left(I\right)_{N\times m}$ is build to store the position indicators: $I_{ij}=\ihat(x_i,x_{1j})$. In the second initialization phase, this array will be used in the construction of the matrix $T=\left(T\right)_{m\times m}$ containing the truth-function products. In particular, $T$ is constructed according to the following rules:
			{\small\begin{equation}\label{constructT}
				T_{kj}=
				\begin{cases}
				1 & \text{if }(k=j)~\wedge~(\forall i:x_{1j}\sqsubseteq x_i)\\
				1 & \text{if }(k<j)~\wedge~(\forall i:x_{1k}x_{1j}\sqsubseteq x_i)\\
				0 & \text{otherwise}
				\end{cases}
				\end{equation}
			}Thus, when $T_{jj}=1$, this implies that $x_{1j}$,  the $j^\text{th}$ character of $x_1$, occurs in all other sequences too and when $T_{kj}=1$, this implies that the subsequence $x_{1k}x_{1j}$ occurs in all sequences. We will use this truth-table in the next subsection to find the longest common subsequences (lcs's) and their length, the llcs.\\
			The following example shows how to use Algorithm \ref{algorithm:zhiwei:hui:concordance} and  Theorem \ref{theorem:zhiwei:hui:concordance} to calculate $\kappa(X)$.
			\begin{example}[Theorem \ref{theorem:zhiwei:hui:concordance}]\label{example:theorem:concordance}
				Given    $X=\{x_1=abcde,x_2=abdce,x_3=bdce\}$ , we set $x=x_1$ and  let  $I=(\ihat_{kj})$, where
				{\small\[I=
					\begin{bmatrix}
					1      & 2 & 3 & 4 & 5 \\
					1      & 2 & 4 & 3& 5 \\
					\infty & 1 & 3 & 2 & 4
					\end{bmatrix}
					\quad
					T=
					\begin{bmatrix}
					0 & 0 & 0 & 0 & 0 \\
					0 & 1 & 0 & 0 & 0 \\
					0 & 1 & 1 & 0 & 0 \\
					0 & 1 & 0 & 1 & 0 \\
					0 & 1 & 1 & 1 & 1
					\end{bmatrix}
					\]}
				Table \ref{table:example:theorem:concordance} then shows how to  calculate $\kappa(X:x_{1j})$ and $\kappa(X)$ with the algorithm implied by Theorem \ref{theorem:zhiwei:hui:concordance},
				\begin{table*}[!htbp]\caption{Example of calculating $\kappa(X)$ for $X=\{x_1=abcde,x_2=abdce,x_3=bdce\}$ with Algorithm \ref{algorithm:zhiwei:hui:concordance} }
					\label{table:example:theorem:concordance}
					\centering
					{\begin{tabular}{l|c|p{7cm}||c}
							\hline
							$j$ & $x_{1j}$ & $\kappa(X:x_{1j})=|\subseqs(X:x_{1j})|$ & $\subseqs(X:x_{1j})$\\
							\hline \hline
							$1$ & $a$ & $1\times 0=0$ & $\emptyset$ \\\hline
							$2$ & $b$ & $1\times 1+0\times 0=1$  &  $\{ b \}$\\\hline
							$3$ & $c$ & $1\times 1 + 0\times 0+1\times 1=2$&  \{$c, bc$\}\\\hline
							$4$ & $d$ & $1\times 1 + 0\times 0+1\times 1+2\times 0=2 $ & $\{d,bd\}$    \\\hline
							$5$ & $e$ & $1\times 1 + 0\times 0+1\times 1+2\times 1+2\times 1=6 $& $\{e,be,ce,bce,de,bde\}$ \\
							\hline
						\end{tabular}
					}
				\end{table*}
				Therefore, from Table \ref{table:example:theorem:concordance}, we see that $\kappa(X)=0+1+2+2+6=11$.
			\end{example}

			\subsection{$\kappa(X)$ for preference orderings   with ties}\label{section:ties:orderings}
			When judges are unable to order certain subsets of the items from the alphabet, ties arise: within a ``tie'' the items appearing in it cannot be ordered with respect to each other. Sequences with ties are easily represented through ``bucket strings'': sequences of small non-empty ``buckets'' or ``sets'' of items and the buckets are ordered. A bucket string, generated by the $i^\text{th}$ judge might then look like, for example
			\begin{equation*}
			b_i=b_{i1},\ldots,b_{ik}= \{a,b\}\{c\}\{d,e,f\},
			\end{equation*}
			implying that judge $i$ preferred both  $a$ and $b$ over $c$ but could not order $a$ and $b$. Only minor changes to the algorithms presented so far, suffice to allow for dealing with these bucket strings.
			
			In order to handle such bucket string $b_{i}=b_{i1}\cdots b_{ik}$ with $n$ symbols,  we introduces a labeling sequence $t_{i}=t_{i1}\cdots t_{in}$,   and  for each symbol $\sigma$ in $b_i$, whose corresponding position is $j$ in $t_i$, we let  $t_{ij}=l$  if the symbol   $  \sigma \in b_{il}$. For example,  the bucket string of  $b_i=  \{a,b\}\{c\}\{d,e,f\}$ has its labeling sequence $t_i$:
			\begin{center}
				
				\begin{tabular}{c|rlcrcl}
					\hline
					$b_i$& $\{a$ & $b\}$ & $\{c\}$ & $\{ d$& $e$&$  f ~~\}$\\
					\hline
					$t_i$      & 1   &1 &2 &3 &3 &3 \\
					\hline
				\end{tabular}
				
			\end{center}
			With $t_i$, we can easily rewrite Theorem \ref{theorem:zhiwei:hui:concordance} for  a set of  ordering sequences with ties.
			Here, because of lack of space, we leave these minor changes to the reader.

			\section{The Smallest Covering Set and its Construction}
			Assuming concordance is high enough, it becomes interesting to scrutinize $X$ in some more detail. This may be done by analyzing the density of the vector-space in which the orderings have been represented through the subsequences. Such an analysis would then use the distances between these vectors: given the $\kappa(x,y)$, such distances are easily obtained since $d(x,y)=\sqrt{2^{n+1}-2-2\kappa(x,y)}$ is a Euclidean metric and the averages $\bar{d}_x=\sum_yd(x,y)/(N-1)$ could be used to isolate ``outlier-judges''. Alternatively, one could compute the distances $d\big(\mathbf{c},\phi(x)\big)$ to the centroid $\mathbf{c}$ of the vector-space. The latter method was described in \cite{Cees:2011:concordance,ShaweTaylor_book}.
			
			Another way of analyzing what is common to the preference orderings in $X$, is to create a set of (sub-)sequences that is in some sense ``characteristic'' for this commonality. An obvious candidate for such a set is the set of all longest common subsequences. However, not all common subsequences are part of an lcs and hence it is interesting to discuss and calculate the broader concept of a smallest covering set. As will appear below, the set of all lcs's is a subset of that covering set.
			
			A covering set of $X$ is a set $\V(X)$ of sequences such that if $x\in\subseqs(X)$, then $\exists y\in\V(X)$ such that $x\subs y$. So, a covering set consists of sequences that ``represent'' all that is common to the sequences in the set $X$. However, this definition is so broad that it even allows for $\subseqs(X)$ itself as a covering set. Therefore it is interesting to look at the Smallest Covering Set $\cov(X)$. A covering set that is smallest contains as few of these covering subsequences as possible. Formally, $\cov(X)\subset\subseqs(X)$ such that
			\begin{enumerate}[~~~~ C. 1]
				\item if $x\in\subseqs(X)$, then $\exists z\in \cov(X)$ such that $x\subs z$,
				\item $|\cov(X)|$ is as small as possible.
			\end{enumerate}
			For example, let $X=\{abcde,eadbc,aedbc\}$. Then $\subseqs(X)=\{a,b,c,d,e,ab,ac,ad,bc,abc\}$ and $\cov(X)=\{abc,ad,e\}$.  Every common subsequence of $X$ is also a subsequence of at least one sequence in $\cov(X)$, the sequences in $\cov(X)$ are not subsequences of each other and the number of sequences in $\cov(X)$ cannot be reduced without violating property C1.
			
			In this example, the first element of $\cov(X)$ is $abc$ and since $abc$ is an lcs of $X$, it should be part of $SCS$ because requirement C2 must be satisfied. When two sequences are lcs's of a set of sequences, they cannot be a subsequence of each other, for if they were, one of them would not be longest. Therefore, we must have that \textit{all} lcs's belong to $SCS$. Furthermore, we note that in the above example, both $ad$ and $e$ belong to $\cov(X)$: they are common to all sequences in $X$ and are not a subsequence of each other or a subsequence of the lcs's. So, it appears that $\cov(X)$ consists of all lcs's of $X$ and all common subsequences of $X$ that are not part of an lcs. So, the sequences in the SCS have an unequivocal interpretation and thus, the SCS is a useful analytical tool. We now focus on the problem of generating the set $\cov(X)$.
			
			As already explained, all lcs's of $X$ must be contained in the SCS: $$\LCSs(X)\subseteq \cov(X)\subseteq\subseqs(X).$$ The construction of the SCS therefore starts with the construction of  $\LCSs(X)$. $\cov(X)=\LCSs(X)$ precisely when all sequences in $\subseqs(X)$ are subsequences of at least one lcs in $\LCSs(X)$. But if this is not the case, i.e. when there exist $y\in\subseqs(X)$ such that $\not\exists z\in \LCSs(X)$ with $y\subs z$, we have to construct additional sequences in order to fulfill the coverage requirement C1. These additional sequences must be shorter than the lcs's and perhaps just consist of one single symbol from the alphabet.
			
			Suppose that for some $u\in\subseqs(X)$ we have that this $u$ is not a subsequence of any of the lcs's of $X$. Then $u$ contains at least one symbol $\sigma$ that does not occur in any of the lcs's of $X$. For suppose, on the contrary, that all characters of this $u$ are contained in some lcs and let $u=u_1u_2\ldots u_{|u|}$. Then there must exist sequences $v_1,\ldots v_{|u|+1}\in\subseqs(X)$, possibly empty, such that
			{\small \begin{equation}
				v_1u_1v_2u_2\ldots v_{|u|}u_{|u|}v_{|u|+1}\in \LCSs(X)
				\end{equation}
			}So, $u$ must be contained in at least one $lcs$ of $X$, contrary to our hypothesis. Therefore, this $u$, not occurring in any of the lcs's, must contain at least one symbol that does not occur in any of the lcs's. If we find symbols that do not occur in any of the lcs's, then this is a sure sign that we have to find more sequences to construct the SCS than just the lcs's. To find these sequences, a good starting point is a symbol not occurring in any of the lcs's and that is precisely what the Algorithm \ref{SCSalg} does.
			
			\begin{algorithm}\caption{Returns the smallest  covering set of a set of preference orderings}\label{SCSalg}
				\KwData{A set of preference orderings $X$.}
				\KwResult{$\cov(X) $}
				$n=|x_{1i}|$\;
				$\D=\{x_{1i}: (i\in [n])\wedge (T_{ii}=1)\}$\;
				$\omega(i)=1, \forall  i\in [n] $\;    
				$\A=\LCSs(X)$ \nllabel{alg:cov:lcs}\;
				$\Lambda=\{\sigma:\sigma\subs x\in\A$\}\;
				$\bar{\Lambda}=\D\backslash\Lambda$\;
				\While{$\bar{\Lambda}\neq\emptyset$}
				{
					$\B=\{v=v_1\lambda v_2:(v_1,v_2\in\subseqs(X))\wedge(\lambda\in\bar{\Lambda})\wedge(v\text{ is alap})\}$\;
					$\omega(i)=0 $ for $\lambda= x_{1i}$ \;
					$\A=\A\cup\B$\;
					$\Lambda=\{\sigma:\sigma\subs x\in \A\}$\;
					$\bar{\Lambda}=\D\backslash\Lambda$\;
				}
				\Return{$\cov(X)=\A$}\;
			\end{algorithm}
			
			The algorithm starts by generating the set $\A$ in Line \ref{alg:cov:lcs}. Then it constructs a set $\bar{\Lambda}$ of symbols that do not occur in any of the lcs's $\bar{\Lambda}=\{\sigma\in\D:\sigma\not\subs x\in\A\}$. If this set is not empty, it picks a symbol $\lambda$ from it and then builds a set $\B$ of sequences that contain $\lambda$, are common to $X$ and are as long as possible (``alap''):
			{\small
				\begin{equation}\label{defB}
				\B=\{v=v_1\lambda v_2:(v\in\subseqs(X))\wedge(\lambda\in\bar{\Lambda})\wedge( v\text{ is alap })\}
				\end{equation}
			}Then $\A$ is set to $\A\cup\B$, $\bar{\Lambda}$ is updated and a new $\B$ is constructed, etc. As soon as $\bar{\Lambda}=\emptyset$, the algorithm returns $\cov(X)=\A$.
			In Algorithm \ref{SCSalg}, it is assumed that there are feasible algorithms to construct $\LCSs(X)$ and the set $\B$ as defined in Equation \eqref{defB}. Therefore, we will deal with these two problems in the next two subsections.
			\subsection{Constructing $\LCSs(X)$}
			Let $x\in \LCSs(X)$. Then $x$ cannot be elongated to a sequence that is still common to the sequences in $X$ and it must have a length $|x|=\ell(X)$. On the other hand, if a sequence has length $\ell(X)$, it must belong to $\LCSs(X)$.
			
			Let    $n=|x_1|$. Clearly $\LCSs(X)$ can be partitioned into subsets that are determined by the symbols in $\Sigma$:
			{\small
				\begin{equation}
				\LCSs(X)=\bigcup_{i\in [n]}\LCSs(X:x_{1i})
				\end{equation}
			} These subsets can be constructed by calculating the lengths of the longest common subsequences that end on each of the symbols from $\Sigma$; the longest of these lengths then equals $\ell(X)$. Therefore, we first create an $|x_1|$-long array $\psi=\psi(1),\ldots,\psi(n)$ such that
			\begin{equation}\label{psidef}
			\psi(i)=\max\{|ux_{1i}|:ux_{1i}\in\subseqs(X)\}.
			\end{equation}
			So, $\psi(i)$ equals the length of the longest common subsequence that ends on the symbol $x_{1i}$ and $\max\{\psi(i):i\in[n]\}=\ell(X)$. To calculate the $\psi(i)$, we use the recursion from Corollary \ref{corol:psi} below.
			\begin{corollary}\label{corol:psi}
				Let $X$ denote a set of preference orderings, let $T$ denote the truth-table as defined in Equation \eqref{constructT} and let the array $\psi$ be defined as in Equation \eqref{psidef}. Then $\psi(i)=$
				{\small \begin{equation}\label{recurpsi}
					\begin{cases}
					0 & \text{ if } T_{ii}=0\\
					1+ \max\{0,T_{ij}\cdot\psi(j):1\leq j<i\} &\text{ otherwise}
					\end{cases}
					\end{equation}
				}and  $\ell(X)=\max\{\psi(i) : 1\leq i \leq n \}$
				
			\end{corollary}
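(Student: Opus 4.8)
The plan is to argue by induction on $i$, in the same spirit as the proof of Theorem~\ref{theorem:zhiwei:hui:concordance}, but with the summation over predecessors replaced by a maximization and the counting of subsequences replaced by measuring their length. Once the recursion for $\psi(i)$ is established, the identity $\ell(X)=\max\{\psi(i):1\le i\le n\}$ is immediate: since $x_1$ has pairwise distinct symbols, every non-empty common subsequence of $X$, read as a subsequence of $x_1$, embeds in a unique way and therefore ends on exactly one symbol $x_{1i}$, so $\subseqs(X)=\bigcup_{i\in[n]}\subseqs(X:x_{1i})$, and a globally longest common subsequence is longest in its own class $\subseqs(X:x_{1i})$; since the length of a longest member of that class is $\psi(i)$ by \eqref{psidef}, we obtain $\ell(X)=\max_i\psi(i)$ as soon as the $\psi(i)$ are known.

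First I would dispose of the degenerate branch. By \eqref{psidef}, $\psi(i)=0$ exactly when no common subsequence of $X$ ends on $x_{1i}$; in particular $x_{1i}$ itself is then not common to all of $X$, which by the defining rule \eqref{constructT} of the truth-table is precisely the condition $T_{ii}=0$. Conversely, if $T_{ii}=1$ then $x_{1i}\sqsubseteq x_k$ for every $k$, so $x_{1i}\in\subseqs(X:x_{1i})$ and $\psi(i)\ge 1$. This settles the first case of \eqref{recurpsi} and shows $\psi(i)\ge 1$ in the second; it remains to account for the longer subsequences.

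Now assume $T_{ii}=1$ and let $w$ be a longest common subsequence ending on $x_{1i}$, so $|w|=\psi(i)$. If $|w|=1$ then $w=x_{1i}$. If $|w|>1$, write $w=u\,x_{1j}\,x_{1i}$ with $x_{1j}$ the penultimate symbol; since $w\sqsubseteq x_1$ and $x_{1i}$ occupies position $i$ in $x_1$, the symbol $x_{1j}$ occupies an earlier position, so $j<i$. The prefix $u\,x_{1j}$ is a common subsequence of $X$ ending on $x_{1j}$, whence $|u\,x_{1j}|\le\psi(j)$, and $x_{1j}x_{1i}\sqsubseteq x_k$ for all $k$, i.e.\ $T_{ij}=1$; this gives $\psi(i)\le 1+\max\{\psi(j):j<i,\ T_{ij}=1\}$. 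For the reverse inequality I would pick the index $j<i$ with $T_{ij}=1$ maximizing $\psi(j)$ and a common subsequence $v=u\,x_{1j}$ realizing $\psi(j)$, and show that $v\,x_{1i}=u\,x_{1j}x_{1i}$ is again common to all of $X$ (the claim discussed below), so that $\psi(i)\ge 1+\psi(j)$. Since $T_{ij}=1$ forces $T_{jj}=1$ and hence $\psi(j)\ge 1$, the product $T_{ij}\cdot\psi(j)$ is never a spurious zero, and the explicit $0$ in the maximum covers the case in which no admissible $j$ exists (then $\psi(i)=1$); the two inequalities therefore combine to $\psi(i)=1+\max\{0,\ T_{ij}\cdot\psi(j):1\le j<i\}$, as claimed, and $\ell(X)=\max_i\psi(i)$ by the partition noted above.

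The one non-routine step --- the main obstacle --- is the claim that $u\,x_{1j}\sqsubseteq x_k$ together with $x_{1j}x_{1i}\sqsubseteq x_k$ implies $u\,x_{1j}x_{1i}\sqsubseteq x_k$. This is false for arbitrary strings, and it is exactly here that the standing hypothesis that preference orderings contain no repeated symbols is indispensable. I would isolate it as a short claim and prove it by a position argument: in $x_k$ every symbol occurs at most once, so $u\,x_{1j}\sqsubseteq x_k$ means each symbol of $u$ appears in $x_k$ strictly before the unique occurrence of $x_{1j}$, while $x_{1j}x_{1i}\sqsubseteq x_k$ means that occurrence precedes the unique occurrence of $x_{1i}$; concatenating these position inequalities exhibits the required order-preserving embedding of $u\,x_{1j}x_{1i}$ into $x_k$. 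With this claim in hand the induction closes.
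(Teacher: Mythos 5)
Your proof is correct and follows the same route the paper intends: the paper's proof is just the one-line ``by induction, using $\psi(1)\le 1$,'' and your argument is exactly that induction carried out in full, with the two inequalities for $\psi(i)$ and the partition of $\subseqs(X)$ by final symbol. You also rightly isolate and justify the one step the paper leaves tacit --- that $u\,x_{1j}\sqsubseteq x_k$ and $x_{1j}x_{1i}\sqsubseteq x_k$ combine to give $u\,x_{1j}x_{1i}\sqsubseteq x_k$, which holds only because the orderings have no repeated symbols.
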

			\begin{proof} By induction, using $\psi(1)\leq1$.\end{proof}
			The algorithm implied in Corollary \ref{corol:LCS} has been integrated in Algorithm \ref{algorithm:zhiwei:hui:concordance}. 
			
			Given that we have calculated $\psi$, we can actually construct the set $\LCSs(X)$: we start by picking a symbol $x_{1i}$ such that $\psi(i)$ is maximal. Now we say that $x_{1i}$ is a candidate-lcs which we will elongate until elongation is not possible anymore. Prefixing $x_{1i}$ is appropriate with $x_{1j}$ when all three of $j<i$, $\psi(j)=\psi(i)-1$ and $x_{1j}x_{1i}\in\subseqs(X)$ hold. Once appropriate prefixes have been found, one searches for new appropriate prefixes, etc.
			
			Therefore, we define a set of all possible prefixes for $x_{1i}$
			{\small \begin{equation}
				\prefixes_i=\Big\{j: (1\leq j <i) \wedge \big(\psi(j)=\psi(i)-1\big) \wedge (T_{ij}=1)  \Big\} 
				\end{equation}
			} The idea of this recursive process, to return a set of subsequences,  is formalized by
			{\small \begin{equation}\label{recursLCS}
				\Theta(i,u)=
				\begin{cases}
				\emptyset & \text{if $\omega(i)=0$}\\
				\Big\{ \Theta(j, x_{1j}u):  \forall j \in \prefixes_i \Big\} & \text{if $\big((\omega(i)\neq 0)$} \\
				& \text{$\wedge (\prefixes_i\neq \emptyset)\big)$} \\
				\Big\{ u \Big\}  &\text{otherwise}
				\end{cases}
				\end{equation}
			}where, for reasons to be explained in the next subsection, the recursion in Equation \eqref{recursLCS} includes the testing of an indicator function $\omega(i)$. Here, we assume that $\omega(i)=1$ for all $i\in[n]$; later we will relax this assumption.

			The function $\Theta$ operates on an index-sequence pair $(i,u)$ where $i$ is the index in $x_1$ of the first symbol in $u$. If $u$ can be appropriately prefixed, i.e. according to the  constraints in its definition, it will return a set of new index-sequence pairs that will be tested for their prefixability. If the sequence in its argument cannot be prefixed, it will be returned by $\Theta$. So ultimately, $\Theta$ will return a set of sequences. We use this recursive function for a ``Depth First Search'' \cite{Cormenetal2001} along the branches of the prefix-tree of sequences that constitute the $\LCSs(X)$. We express these ideas in Corollary \ref{corol:LCS}.
			\begin{corollary}\label{corol:LCS}
				Let $X$ denote a set of preference orderings, let the array $\psi$ be defined as in Equation \eqref{psidef} and let the function $\Theta$ be defined as in Equation \eqref{recursLCS}. Then, with
				{\small\begin{equation}\label{rootdef}
					\rootset=\{i:\psi(i)=\ell(X)\},
					\end{equation}
				}we have that
				{\small\begin{equation}
					\LCSs(X)=\{\Theta(i,u):(i\in\rootset)\wedge(u=x_{1i})\}.
					\end{equation}
				}\end{corollary}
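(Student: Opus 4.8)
The plan is to prove the two inclusions $\LCSs(X)\supseteq\bigcup_{i\in\rootset}\Theta(i,x_{1i})$ and $\LCSs(X)\subseteq\bigcup_{i\in\rootset}\Theta(i,x_{1i})$ separately (reading the nested-set output of $\Theta$ as its flattening into a set of sequences, and recalling that here $\omega\equiv1$, so the first branch of Equation \eqref{recursLCS} never fires). Everything rests on one structural fact about preference orderings, which I would isolate first as an auxiliary \emph{gluing lemma}: since each symbol occurs at most once in every $x_k$, if $p\sigma\in\subseqs(X)$ and $\sigma q\in\subseqs(X)$ then $p\sigma q\in\subseqs(X)$. The proof of this just notes that in each $x_k$ the occurrences of the symbols of $p$ all precede the unique occurrence of $\sigma$, which precedes all occurrences of the symbols of $q$, and that $p$ and $q$ cannot share a symbol, for such a symbol would have to occur both before and after $\sigma$ in $x_k$. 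Alongside this I would record three trivialities: (i) any common subsequence of length $\ell(X)$ is an lcs; (ii) every lcs, being a subsequence of $x_1$, ends on some $x_{1i}$, for which necessarily $\psi(i)=\ell(X)$, i.e. $i\in\rootset$; and (iii) if $T_{ij}=1$ with $j<i$ then $x_{1j}\sqsubseteq x_k$ for all $k$, hence $T_{jj}=1$ and $\psi(j)\ge1$, so that $\prefixes_i=\emptyset$ whenever $\psi(i)=1$ and $\Theta(i,u)=\{u\}$ there.

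For soundness I would prove, by induction on $\psi(i)$, the invariant: whenever $\Theta(i,u)$ is called with $u\in\subseqs(X)$, first symbol $x_{1i}$ and $|u|=\ell(X)-\psi(i)+1$, every sequence it returns lies in $\LCSs(X)$; this applies to the initial calls since $i\in\rootset$ forces $T_{ii}=1$, hence $x_{1i}\in\subseqs(X)$ and $|x_{1i}|=1=\ell(X)-\psi(i)+1$. The base case $\psi(i)=1$ gives $\prefixes_i=\emptyset$, so $\Theta(i,u)=\{u\}$ with $|u|=\ell(X)$ and $u\in\subseqs(X)$, i.e. $u\in\LCSs(X)$. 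In the step, Corollary \ref{corol:psi} guarantees $\prefixes_i\neq\emptyset$, and for each $j\in\prefixes_i$ the pair $(j,x_{1j}u)$ satisfies the invariant: $\psi(j)=\psi(i)-1$, $|x_{1j}u|=\ell(X)-\psi(j)+1$, and $x_{1j}u\in\subseqs(X)$ by the gluing lemma applied to $x_{1j}x_{1i}\in\subseqs(X)$ (which is exactly what $T_{ij}=1$ encodes) and $u\in\subseqs(X)$ with pivot $x_{1i}$; the induction hypothesis then finishes it.

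For completeness, take $w=x_{1i_1}\cdots x_{1i_\ell}\in\LCSs(X)$ with $i_1<\cdots<i_\ell$ and $\ell=\ell(X)$. The crucial computation is $\psi(i_p)=p$ for every $p$: the prefix $x_{1i_1}\cdots x_{1i_p}$ is a common subsequence ending on $x_{1i_p}$, giving $\psi(i_p)\ge p$, while gluing a longest common subsequence ending on $x_{1i_p}$ with the suffix $x_{1i_p}x_{1i_{p+1}}\cdots x_{1i_\ell}$ of $w$ along the pivot $x_{1i_p}$ yields a common subsequence of length $\psi(i_p)+(\ell-p)\le\ell$, forcing $\psi(i_p)\le p$. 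Hence $i_{p-1}\in\prefixes_{i_p}$, since $i_{p-1}<i_p$, $\psi(i_{p-1})=p-1=\psi(i_p)-1$, and $T_{i_p i_{p-1}}=1$ because $x_{1i_{p-1}}x_{1i_p}\sqsubseteq w\sqsubseteq x_k$; therefore $\Theta(i_{p-1},x_{1i_{p-1}}\cdots x_{1i_\ell})\subseteq\Theta(i_p,x_{1i_p}\cdots x_{1i_\ell})$. Chaining these inclusions for $p=2,\dots,\ell$ and using that $\psi(i_1)=1$ makes $\prefixes_{i_1}=\emptyset$, so $\Theta(i_1,w)=\{w\}$, gives $w\in\Theta(i_\ell,x_{1i_\ell})$ with $i_\ell\in\rootset$, as required.

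I expect the main obstacle to be the gluing lemma together with the exact identity $\psi(i_p)=p$: the upper bound on $\psi(i_p)$ works only because the pivot-gluing cannot create a repeated symbol, which is precisely where the preference-ordering hypothesis is used, and one must also check that $\Theta$ terminates (each recursive call strictly decreases $\psi$) and that reading its output as a flat set of sequences is consistent with Equation \eqref{recursLCS}.
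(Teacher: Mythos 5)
Your proof is correct. There is little to compare it against: the paper's entire proof of Corollary~\ref{corol:LCS} is the phrase ``by induction,'' so your write-up supplies exactly the content that such an induction needs and that the paper leaves implicit. Two of your ingredients deserve emphasis because neither is automatic. First, the gluing lemma (if $p\sigma\in\subseqs(X)$ and $\sigma q\in\subseqs(X)$ then $p\sigma q\in\subseqs(X)$) is where the hypothesis that the $x_k$ are preference orderings --- each symbol occurring at most once --- actually enters; for general strings this fails, and with it both the soundness step ($x_{1j}u\in\subseqs(X)$ from $T_{ij}=1$ and $u\in\subseqs(X)$) and the upper bound in $\psi(i_p)=p$. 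Second, the identity $\psi(i_p)=p$ for the positions of an lcs is the real content of completeness: it is what guarantees that every lcs is reachable along a chain of $\prefixes$-edges ending in $\rootset$, rather than merely that everything $\Theta$ emits is an lcs. Your observation that $T_{ij}=1$ with $j<i$ forces $T_{jj}=1$ and hence $\psi(j)\geq 1$, so that $\prefixes_i=\emptyset$ exactly when $\psi(i)=1$, correctly grounds both the base case and termination. The only caveat is the one you already flag yourself: Equation~\eqref{recursLCS} formally returns nested sets, so the stated equality must be read modulo flattening the output of $\Theta$ into a set of sequences.
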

				\begin{proof}By induction.\end{proof}
				According to Corollary \ref{corol:LCS}, the construction of $\LCSs(X)$ starts with the root-set $\rootset$ that, with its argument indices, points to the end-symbols of the lcs's, elongates and finally returns $\LCSs(X)$. The algorithm implied by Corollary \ref{corol:LCS} is shown in Algorithm \ref{alg:Theta}.
				Example \ref{exampleLCS} applies Corollary \ref{corol:LCS} to the set of sequences previously used.
				\begin{example}\label{exampleLCS}
					Let $X=\{x_1=abcde,~x_2=abdce,~x_3=bdce\}$. Then
					{\small \begin{equation*}
						T=
						\begin{bmatrix}
						0 & 0 & 0 & 0 & 0 \\
						0 & 1 & 0 & 0 & 0 \\
						0 & 1 & 1 & 0 & 0 \\
						0 & 1 & 0 & 1 & 0 \\
						0 & 1 & 1 & 1 & 1
						\end{bmatrix}
						\text{ , }\psi=(0,1,2,2,3)
						\end{equation*}
					}  and $\rootset=\{5\}$, hence
					\begin{align*}
					\LCSs(X)=&\\\{\Theta(5&,e)\}\\
					=\{\{\Theta(3,ce)\}&,\{\Theta(4,de)\}\}\\
					=\{\{\Theta(2,bce)\}&,\{\Theta(2,bde)\}\}\\
					=\{\{bce\}&,\{bde\}\}\\
					=\{bce&,bde\}.
					\end{align*}
				\end{example}
				\begin{algorithm}\caption{Function $\Theta$ to construct the $\LCSs(X)$ }\label{alg:Theta}
					{	
						\KwData{sequence $x_1$, arrays $\psi,~\omega$, set of integers $\rootset$}
						\KwIn{integer $i$, sequence $u$}
						\KwOut{$\LCSs(X) $}
						$LCS=\emptyset$\;
						\For{$i\in\rootset$}
						{
							$u\leftarrow x_{1i}$\;
							$A=\emptyset$\;
							\For{$j\leftarrow 1$ \KwTo $i$}
							{
								\If{$\big(\psi(j)=\psi(i)-1\big)\wedge(T_{ij}=1)$}
								{
									$v\leftarrow x_{1j}u$\;
									$A\leftarrow A\cup\{\Theta(j,v)\}$\;
								}
							}
							\eIf{$A=\emptyset$}
							{
								$LCS\leftarrow LCS\cup\{u\}$\;
							}{
							$LCS\leftarrow LCS\cup A$\;    
						}
					}
					\Return $LCS$
				}
			\end{algorithm}

			\subsection{From $\LCSs(X)$ to $\cov(X)$}
			Given that the $\LCSs(X)$ is constructed, we now have to find a way to construct the set $\B$ as defined in Equation \eqref{defB}. $\B$ consists of sequences that contain at least one symbol that is not already part of the sequences that have been labeled as belonging to SCS.
			
			The solution is a bit analogous to that of finding all lcs's: we begin with one such symbol, say $x_{1k}$ not occurring in any lcs, find all the longest prefixes through $\Theta(\cdot)$ and then find all the longest postfixes of the results of $\Theta(\cdot)$. All combinations of such a postfix and a prefix will be a sequence that belongs to the SCS as well.
			
			Only, there is one complication. If we construct all common subsequences that contain $x_{1k}\in\bar{\Lambda}$ and that are alap, some of these common subsequences might contain one or more other characters that do not occur in an lcs either, i.e are contained in $\bar{\Lambda}$ too. Let $x_{1m}$ be such a character and suppose that we just constructed all the alap sequences containing $x_{1k}$. When we now start finding all such sequences containing $x_{1m}$, we will inevitably find some that also contain $x_{1k}$ and such alap common subsequences must have been found already. Therefore, we will have to keep track of the symbols in $\Sigma$ that were already dealt with, i.e. for which we already constructed all common subsequences that contain these symbols. To do just that, let $n=|x_1|$,  we define the array $\omega=(\omega_1\ldots,\omega_{n})$ with $\omega(i)=1$ when $x_{1i}$ is still allowed as a symbol in the construction process, otherwise we set $\omega(i)=0$.

			Finding longest postfixes is analogous to finding longest prefixes through $\Theta$. To do just that, we define   
			{\small\begin{equation}
				\postfixes_i=\{j : (i<j\leq n)  \wedge (T_{ji}=1) \}
				\end{equation}
			}to record all possible postfixes after $x_{1i}$
			and define a recursive function $\Upsilon$:
			
			{\small\begin{equation}\label{recursUpsilon}
				\Upsilon(i,u)=
				\begin{cases}
				\emptyset & \text{if $\omega(i)=0$}\\
				\Big\{ \Upsilon(j, ux_{1j}):   \forall j \in \postfixes_i \Big\} & \text{if $(\omega(i)\neq 0)$} \\
				&\text{ $\wedge (\postfixes_i\neq \emptyset)$} \\
				
				\Big\{ u \Big\}  &\text{otherwise}
				\end{cases}
				\end{equation}
			}
			The recursive $\Theta(i,u)$ and $\Upsilon(i,u)$ can be used to obtain $v_1$ and $v_2$, respectively, as shown in Equation \eqref{defB}.  With these two recursive functions, assuming that $\lambda\in\bar{\Lambda}$ occurs at   $i$-th  position in  $x_1$, then we rewrite Equation \eqref{defB} as 
			\begin{equation}\label{eq:recursive:B}
			\B=\Big\{ \Upsilon\Big(i,\Theta(i,u)\Big) : (x_{1i}=\lambda)\wedge (\lambda\in\bar{\Lambda})  \Big\}.
			\end{equation}
			
			With Corollary \ref{corol:LCS} and Equation \eqref{eq:recursive:B}, we illustrate how Algorithm \ref{SCSalg} works with  the calculations implied by Equation \eqref{recursUpsilon} in Example \ref{exampleSCS}:

			\begin{example}\label{exampleSCS}
				We use $X=\{x_1=abcdef,~~x_2=acfbde,~~x_3=abdcfe\}$ as our toy data set and list all its common subsequences:
				\begin{align*}
				\begin{tabular}{ll}
				$\subseqs(X)$=\{ & a,b,c,d,e,f\\
				& ab,ac,ad,ae,af,bd,be,ce,cf,de\\
				& abd,abe,ace,acf,ade,bde\\
				& abde~~\} \\
				\end{tabular}
				\end{align*}
				
				We will now construct $\cov(X)$. First we generate $\LCSs(X)$. Preprocessing yields
				{\small\begin{equation*}
					T=\begin{bmatrix}
					1 \\
					1 & 1 \\
					1 & 0 & 1 \\
					1 & 1 & 0 & 1 \\
					1 & 1 & 1 & 1 & 1 \\
					1 & 0 & 1 & 0 & 0 & 1\\
					\end{bmatrix}
					\text{  and  }\psi=\left(1,2,2,3,4,3\right),
					\end{equation*}
				}which is sufficient for $\Theta$:
				\begin{align*}
				&\Theta(5,e)=\{\Theta(4,de)\}=\{\Theta(2,bde)\}\\
				&=\{\Theta(1,abde)\}=\{abde\}=\LCSs(X).
				\end{align*}
				Then, we conclude that $\bar{\Lambda}=\{c,f\}$ and thus that $\omega=(1,1,1,1,1,1)$. We start processing $c$ (the reader might check that starting with $f$ would make no difference for the final result):
				{\small \begin{equation*}
					\Theta(3,c)=\{\Theta(1,ac)\}=\{ac\}.
					\end{equation*}
				}Next, we evaluate
				{\small\begin{equation*}
					\Upsilon(3,ac)=\{\Upsilon(5,ace),\Upsilon(6,acf)\}=\{ace,acf\}
					\end{equation*}
				}and set $\omega=(1,1,0,1,1,1)$ since all alap subsequences that contain $x_{13}=c$ have been constructed. Finally, we process $f$ and find
				{\small \begin{equation*}
					\Theta(6,f)=\{\Theta(3,cf)\}=\emptyset
					\end{equation*}
				}since $\omega(3)=0$: indeed, we already found $acf$. So, we conclude that $\cov(X)=\{abde,ace,acf\}$. The reader also nodes that the order of applying $\Theta$ or $\Upsilon$ to the elements of $\bar{\Lambda}$, is immaterial.
			\end{example}
			
			\section{Conclusion}\label{section:conclusion}
			Concordance has been quantified in many ways, most of these using only a small fraction of the information available in preference orderings. We proposed to use the nacs as the basis for evaluating concordance: it uses all of the available information, it is a metric similarity \cite{ChenMaZhang09} in case it is applied to pairs of orderings, the complexity of its calculation  is  only of order $O(Nn^2)$ and at the same time provides for the preprocessing that allows for efficient calculation of the Smallest Covering Set. The SCS is a valuable, easy to compute descriptive tool in the analysis of concordance and may help group leaders in creating consensus in group decision making. The algorithms in the paper have been implemented in Python and made available on Github (\url{https://github.com/zhiweiuu/secs}).

			As a descriptive tool for sets of sequences, SCS could be very useful in applications where sequences have repeating symbols: in web browsing where the same page is visited again, in social demography and career analysis where certain events may happen repeatedly and in the analysis of strands of peptides which consist of only a few elementary building blocks. Therefore, we will extend our research to algorithms for bigger sets of sequences with extended runs of the same symbols and to develop further methods and tools for the analysis of the SCS.
			
			\section*{Acknowledgment}
			The research leading to these results has received funding from the European Research Council under the European Unions Seventh Framework Programme (FP/2007-2013)/ERC Grant Agreement n. 324178 (Project: Contexts of Opportunity, PI: Aart C. Liefbroer), and   from the EU Horizon 2020 research and innovation programmme under grant agreement (No 690238) for DESIREE project.

			\bibliographystyle{IEEEtran}
			\bibliography{bib}
			
		\end{document}